\pgfplotsset{compat=1.14}
\newcommand{\hamm}[1]{\mathcal{H}(#1)}
\newcommand{\tobinary}[1]{\mathcal{B}(#1)}
\newcommand{\theop}{\mathcal{X}}
\newtheorem{theorem}{Theorem}
\newtheorem{lemma}{Lemma}
\newtheorem{definition}{Definition}
\begin{document}

\title{Black-Box Complexity of the Binary Value Function\footnote{An extended two-page abstract of this work will appear in proceedings of the Genetic and Evolutionary Computation
Conference, GECCO'19.}}
\author{Nina Bulanova, Maxim Buzdalov}

\maketitle

\begin{abstract}
The binary value function, or \textsc{BinVal}, has appeared in several studies in theory of evolutionary computation
as one of the extreme examples of linear pseudo-Boolean functions. Its unbiased black-box complexity was previously
shown to be at most $\lceil \log_2 n \rceil + 2$, where $n$ is the problem size.

We augment it with an upper bound of $\log_2 n + 2.42141558 - o(1)$, which is more precise for many values of $n$.
We also present a lower bound of $\log_2 n + 1.1186406 - o(1)$. Additionally, we prove that \textsc{BinVal} is an easiest
function among all unimodal pseudo-Boolean functions at least for unbiased algorithms.
\end{abstract}

\section{Introduction}
Theory of randomized search heuristics studies how various problems are solved by these heuristics,
whether it is efficient or not, and what are the key properties of both problems and heuristics that determine the
efficiency of the search and the quality of the results. In the current state of this area of computer science
there are two major building blocks that augment each other: runtime analysis and black-box complexity theory.
The former studies how fast particular randomized search heuristics are on particular problems or problem classes,
the latter strives to find how difficult it is to solve a problem (typically from the given class) by the best
suitable randomized search heuristic (and why). The gaps between the complexities of various problems and
the runtimes of existing algorithms on these problems are an important source of difficult questions and new inspiring results.

The (unrestricted) black-box complexity of a problem, as introduced in~\cite{droste-jansen-wegener},
is, roughly speaking, the expected runtime of the best possible black-box algorithm solving this problem, measured as the number of queries
to the function representing the quality of a solution to the problem.
Since every possible black-box algorithm is explicitly permitted, such complexity can be inadequate to explain the performance
of randomized search heuristics, as they typically use only a restricted set of possible operations with the candidate solutions.
By limiting the possible operations to some well-defined ``fair'' subset, researchers define more restricted notions of black-box complexity,
which hopefully better describe why (particular) randomized search heuristics are good or bad for certain problems.

In particular, the notion of unbiased black-box complexity was introduced in~\cite{unbiased-bbc-algorithmica} for pseudo-Boolean problems.
Since evolutionary algorithms and other randomized search heuristics are designed as
general-purpose solvers, they shall not prefer one instance of a problem over another one.
This is reflected in the definition of an unbiased black-box algorithm, which, for the particular case of
algorithms operating on bit strings, basically amounts to the invariance of the algorithm under transformations preserving
the Hamming distance between two candidate solutions, which reduces to invariance under systematic flipping of arbitrary but fixed set of bit indices,
and invariance under systematically applying an arbitrary but fixed permutation to all the bits.
Unbiased black-box algorithms are regarded as a better model of randomized search heuristics,
since these algorithms, just like randomized search heuristics, cannot perform a number of fine-grained operations
that can be considered problem-dependent.

Unfortunately, the ways were found to perform most of the work without making queries
in the unbiased model too~\cite{unbiased-partition-is-polynomial,too-fast-unbiased-bbc}.
In fact, it was shown that, with a proper notion of unbiasedness for the given type of individuals,
the unbiased black-box complexity coincides with the unrestricted one~\cite{generic-unbiased-algorithms}.
Several alternative restricted models of black-box algorithms were subsequently introduced
as a reaction, namely ranking-based algorithms~\cite{ranking-based-complexity},
limited-memory algorithms~\cite{mastermind-constant-memory},
and elitist algorithms~\cite{elitist-complexity}.

One of possible refinements of the unbiased black-box search model is the use of unbiased operators
with restricted arity. The original paper~\cite{unbiased-bbc-algorithmica} studied mostly \emph{unary} unbiased black-box complexity,
e.g.~the class of algorithms allowing only unbiased operators taking one individual and producing another
one, which can also be seen as \emph{mutation-only} algorithms. This model appeared to be quite restrictive, e.g.~the unary unbiased
black-box complexity of \textsc{OneMax} was proven to be $\Theta(n \log n)$~\cite{doerr-doerr-yang-optimal-parameter-choices-gecco,unbiased-bbc-algorithmica}.
Together with the rather old question of whether crossover is useful in evolutionary algorithms
(which was previously answered positively~\cite{doerr-all-pairs,jansen-crossover,sudholt-ising-models}),
this inspired a number of works on higher-arity unbiased algorithms,
since many crossovers are binary unbiased operators.

The theorem that for $k \ge 2$ the
$k$-ary unbiased black-box complexity of \textsc{OneMax} is $O(n / \log k)$, which was proven
in~\cite{doerr-johannsen-faster-blackbox}, was the first signal that higher arities are useful,
which have been followed by a more strong result of $O(n / k)$ for $k = O(\log n)$~\cite{doerr2014reducing}.
Among others, an elegant crossover-based algorithm with the expected running time of $2n - O(1)$
was presented, which works for all linear functions. Several particular properties of this algorithm
inspired the researchers to look deeper for faster general-purpose algorithms that use crossover.
The first reported progress of algorithms using crossover on simple problems like \textsc{OneMax}
was made in~\cite{sudholt-crossover-speeds-up-evco}, where an algorithm was presented
with the same $O(n \log n)$ asymptotic as in simple unary algorithms, but with a better constant factor.
An algorithm, called the $(1 + (\lambda,\lambda))$ genetic algorithm, was
presented in~\cite{learning-from-black-box-thcs} along with the proof of the $O(n \sqrt{\log n})$ runtime
on \textsc{OneMax}, which was faster than any evolutionary algorithm before,
and improved performance on some other problems.
With the use of self-adaptation for its parameter $\lambda$, the $O(n)$ bound was proven for the runtime
on \textsc{OneMax}~\cite{doerr-doerr-lambda-lambda-self-adjustment}, and similar improvements were shown
later on a more realistic problem, MAX-SAT~\cite{buzdalovD-gecco17-3cnf}.
Experiments show that the constant in $O(n)$ is rather small.

Another proof of the usefulness of the crossover has been recently presented in~\cite{pinto-doerr-crossover-ppsn18}.
The paper considers the $(2+1)$ genetic algorithm from~\cite{sudholt-crossover-speeds-up-evco}, which has been
previously shown to be faster than the $(1+1)$ evolutionary algorithm, but slower than randomized local search (RLS), on \textsc{OneMax}.
However, once the mutation operator is improved in such a way that, when it is the only operator to apply, it always flips at least one bit,
this algorithm becomes faster, by a constant factor, than not only RLS, but any other unary unbiased algorithm.
This result shows the usefulness of the crossover with a much simpler approach compared to~\cite{learning-from-black-box-thcs}.

Unfortunately, there are still no matching lower bounds known for $k$-ary unbiased black-box algorithms with $k > 1$, even for \textsc{OneMax}.
It seems to be believed that the binary unbiased black-complexity of \textsc{OneMax} is linear,
but no lower bounds, other than the trivial $\Omega(n / \log n)$ bound, are known.
A part of the difficulty of this problem is that, apart from the best individual like in the unary case~\cite{doerr-doerr-yang-optimal-parameter-choices-gecco},
one has to track at least one other individual, which need not be second best, to get the best from the ability to perform crossover, which complicates the possible proofs.

Our long-standing conjecture is that the binary unbiased black-box complexity is linear not only for \textsc{OneMax}, but for any linear function,
as the binary algorithm from~\cite{doerr-johannsen-faster-blackbox} works on linear functions without any changes. The class of linear functions
includes the \textsc{BinVal} function, a linear function with weights equal to powers of two.
It is notoriously known to have a ridiculously low complexity of $2 - \frac{1}{2^n}$ when the weights are known to be ordered.
A more relevant version that allows arbitrary permutation of weights as well as arbitrary bit strings as optima has been
considered in~\cite{ranking-based-complexity}, where the upper bound on its black-box complexity was proven to be $\lceil \log_2 n \rceil + 2$.
We feel that this is a more suitable function to study lower bounds on the unbiased binary black-box complexity
for the following reasons.
\begin{itemize}
    \item Any lower bound can be immediately extended not only to arbitrary linear functions, including \textsc{OneMax}, but to any class of unimodal functions,
          as any unbiased binary black-box algorithm that can optimize these functions can be applied to \textsc{BinVal}.
    \item The \textsc{BinVal} function reveals, through the fitness values, as much information about the structure of already sampled individuals
          as it is possible in an unbiased setting, which would simplify the analysis of lower bounds.
\end{itemize}

Although this paper does not contribute to the field of binary unbiased black-box complexities,
we augment the current knowledge about the \textsc{BinVal} function. This paper presents the following contribution.
\begin{itemize}
    \item A constructive \textbf{procedure} to compute the exact black-box complexity of \textsc{BinVal} for any given problem size $n$, which yields
          an \textbf{algorithm} to solve \textsc{BinVal} with the expected running time equal to this complexity.
    \item An \textbf{upper bound} on the black-box complexity of \textsc{BinVal}, which refines the known $\lceil \log_2 n \rceil + 2$ upper bound
          for many values of $n$ and is equal to $\log_2 n + 2.42141558 - \Theta(\log_2 n / 2^n)$.
    \item A \textbf{lower bound} on the black-box complexity of \textsc{BinVal}, which is proven for the first time and is equal to
          $\log_2 n + 1.1186406 - \Theta(\log_2 n / 2^n)$.
\end{itemize}

The rest of the paper is structured as follows. Section~\ref{sect:pre} introduces the necessary notation and definitions.
Section~\ref{sect:unrest} studies unrestricted and unbiased black-box complexities of \textsc{BinVal}. Finally,
Section~\ref{sect:conc} concludes and gives final remarks.

\section{Preliminaries}\label{sect:pre}

Throughout the paper, we denote the set of integer numbers $\{1, 2, \ldots, n\}$ as $[1..n]$.
We denote as $\hamm{x,y}$ the Hamming distance between two bit strings $x$ and $y$ of the same length,
that is, the number of positions at which they do not agree. In a few places we use the notation
$\tobinary{x}$ for the integer number $x$ written in the binary numeral system,
where the necessary number of leading zeros may need to be prepended depending on the context. 

We denote by $E_{\mathcal{A}}(f)$ the expected running time of an algorithm $\mathcal{A}$ on a function $f$,
that is, the expected number of queries to that function until its optimum is queried for the first time.
The \emph{black-box complexity} of a class of functions $F$ for a class of algorithms $\mathcal{X}$ is the following value~\cite{doerr2014reducing}:
\begin{align*}
    BBC_{\mathcal{X}}(F) = \inf_{\mathcal{A} \in \mathcal{X}} \sup_{f \in F} E_{\mathcal{A}}(f).
\end{align*}

Throughout the paper, we consider only the algorithms operating on bit strings of fixed length.
A \emph{$k$-ary variation operator}~\cite{doerr2014reducing}
$\theop$ produces a search point $y$ from the given $k$ search points $x_1, \ldots, x_k$ with probability $P_{\theop}(y \mid x_1, \ldots, x_k)$.
The operator $\theop$ is \emph{unbiased}~\cite{unbiased-bbc-algorithmica} if the following relations hold for all search points $x_1, \ldots, x_k, y, z$ and all permutations $\pi$ over $[1..n]$:
\begin{align*}
    P_{\theop}(y \mid x_1, \ldots, x_k) &= P_{\theop}(y \oplus z \mid x_1 \oplus z, \ldots, x_k \oplus z), \\
    P_{\theop}(y \mid x_1, \ldots, x_k) &= P_{\theop}(\pi(y) \mid \pi(x_1), \ldots, \pi(x_k)),
\end{align*}
where $a \oplus b$ is the bitwise exclusive-or operation applied to two bit strings $a$ and $b$ of the same length, and $\pi(a)$ is an application of permutation $\pi$ to a bit string $a$.

The algorithm is a \emph{$k$-ary unbiased black-box algorithm}~\cite{doerr2014reducing} if on every iteration it performs the following actions:
\begin{itemize}
    \item Based only on the fitness values of already queried individuals, it chooses:
    \begin{itemize}
        \item a non-negative integer number $k' \le k$;
        \item $k'$ individuals among the already queried ones, possibly with repetitions, noting that the order of the individuals matters, and
        \item a $k'$-ary unbiased variation operator.
    \end{itemize}
    \item It applies the chosen operator to the chosen individuals.
    \item Finally, it evaluates the fitness of an individual which the chosen operator produced.
\end{itemize}

Another definition for the unbiased black-box algorithm~\cite{generic-unbiased-algorithms} does not explicitly limit the set of possible variation operators,
but instead requires that the probability distributions of the produced individuals do not change when the problem undergoes certain transformations
(which, in the case of pseudo-Boolean problems, are exactly the class of transformations preserving Hamming distances). However,
the definition of a $k$-ary unbiased black-box algorithm becomes identical with the definition from~\cite{generic-unbiased-algorithms} when
$k$ is set to infinity, which is allowed by the particular flavour of the definition above. For this reason, any theorem, that holds for
$k$-ary unbiased black-box algorithms assuming arbitrary $k$, also holds for ``just'' unbiased algorithms.

Closely following~\cite{ranking-based-complexity}, we consider the following class of functions called ``binary value'', or \textsc{BinVal},
defined on bit strings of length $n$:
\begin{equation}
\textsc{BinVal}_{z,\pi}(x) = \sum_{i=1}^{n} 2^{i-1} \cdot [z_i = x_{\pi(i)}], \label{binval-basic}
\end{equation}
where $z \in \{0;1\}^n$ is the hidden bit string representing the unknown optimum, $\pi: [1..n] \to [1..n]$
is a hidden permutation of indices from the range $[1..n]$ that defines which weights are given to which indices,
and the \emph{Iverson bracket} $[.]$ is the notation for a function that converts the logic truth to~1 and the logic false to~0.
This function has a single global maximum at $x = z$, which we strive to find, with the corresponding function value of $2^n-1$.

Unlike the ``classical'' binary value function, where the weights are rigidly assigned to the bit indices,
such definition makes the problem class symmetric with regards to isomorphisms of the Hamming cube that represents the search space.
A similar procedure has been done on another famous benchmark function, \textsc{LeadingOnes}, to prove its black-box complexity
of $\Theta(n \log \log n)$~\cite{bbc-leadingones}.

One of the properties that make this function special is that it essentially defines a bijection between the queried bit strings and the
function values. This bijection also preserves the Hamming distance: for any $z$ and $\pi$, and for any two bit strings $x_1$ and $x_2$, it holds that
$\hamm{x_1, x_2} = \hamm{\tobinary{f(x_1)}, \tobinary{f(x_2)}}$, where $f = \textsc{BinVal}_{z,\pi}$.
As a result, this function exposes as much of the information about the hidden parameters as possible for an unbiased setting.
This property makes \textsc{BinVal} ``an easiest function'' regarding its unbiased black-box complexity
(and, in general, $k$-ary unbiased black-box complexity for any $k$).
More formally, the following lemma holds:

\begin{lemma}
For any integer $k \ge 0$ and $X$ being a class of $k$-ary unbiased black-box algorithms,
and for any class of pseudo-Boolean functions $F$ defined on bit strings of length $n$ which
(i) consists of functions with a single global optimum and 
(ii) is symmetric under the Hamming cube isomorphisms,
$BBC_{\mathcal{X}}(\textsc{BinVal}) \le BBC_{\mathcal{X}}(F)$.
\label{binval-easiest}
\end{lemma}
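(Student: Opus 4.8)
The plan is to show that every $k$-ary unbiased black-box algorithm $\mathcal{A}$ that optimizes $F$ can be converted into a $k$-ary unbiased black-box algorithm $\mathcal{A}'$ for \textsc{BinVal} whose expected running time on every instance is at most $\sup_{f \in F} E_{\mathcal{A}}(f)$; taking the infimum over $\mathcal{A} \in \mathcal{X}$ then gives the claimed inequality. The whole reduction rests on the Hamming-distance-preserving bijection noted just before the lemma: for a fixed instance $\textsc{BinVal}_{z,\pi}$, the signature map $\phi(x) = \tobinary{\textsc{BinVal}_{z,\pi}(x)}$ is a bijection of $\{0,1\}^n$ that preserves Hamming distance and sends the optimum $z$ to the all-ones string $\mathbf{1}$. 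Since the isometries of the Hamming cube are exactly the compositions of a coordinate permutation with a fixed bitwise XOR, $\phi$ is itself a Hamming cube isomorphism of the kind under which unbiased operators are invariant, even though $\mathcal{A}'$ never learns $\phi$ explicitly.

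First I would fix, independently of the hidden $z$ and $\pi$, a reference function $\hat f \in F$ with its unique optimum $\hat o$, and set $h = \hat f \circ \psi$ with $\psi(s) = s \oplus (\mathbf{1} \oplus \hat o)$. By symmetry of $F$ under Hamming cube isomorphisms we have $h \in F$, and by the single-optimum assumption $h$ has a unique global optimum, now relocated to $\mathbf{1}$. The algorithm $\mathcal{A}'$ then runs $\mathcal{A}$ on $h$ as an internal subroutine, maintaining the invariant that the $i$-th point $s_i$ handled by $\mathcal{A}$ satisfies $s_i = \phi(x_i)$, where $x_i$ is the $i$-th point actually queried by $\mathcal{A}'$ on \textsc{BinVal}. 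The key observation is that $\mathcal{A}'$ can always supply $\mathcal{A}$ with the fitness value it needs: it obtains $s_i = \phi(x_i) = \tobinary{\textsc{BinVal}_{z,\pi}(x_i)}$ from a single \textsc{BinVal} query at $x_i$, and then evaluates $h(s_i) = \hat f(\psi(s_i))$ using the explicitly known $\hat f$ and $\psi$, with no further queries.

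The main step is to verify that the invariant propagates. When $\mathcal{A}$, based on the $h$-values it has seen, chooses an unbiased operator $\mathcal{O}$ and inputs $s_{j_1}, \dots, s_{j_{k'}}$ and would sample $s_{\mathrm{new}} \sim P_{\mathcal{O}}(\cdot \mid s_{j_1}, \dots, s_{j_{k'}})$, the algorithm $\mathcal{A}'$ instead applies the same $\mathcal{O}$ to the real individuals $x_{j_1}, \dots, x_{j_{k'}}$, queries the produced point $x_{\mathrm{new}}$, and sets $s_{\mathrm{new}} = \phi(x_{\mathrm{new}})$. Because $\phi$ is a Hamming cube isomorphism and each $s_{j_\ell} = \phi(x_{j_\ell})$, unbiasedness yields $P_{\mathcal{O}}(\phi(x_{\mathrm{new}}) \mid \phi(x_{j_1}), \dots) = P_{\mathcal{O}}(x_{\mathrm{new}} \mid x_{j_1}, \dots)$, so the distribution of $s_{\mathrm{new}}$ obtained this way is precisely the one $\mathcal{A}$ expects; the simulation is therefore faithful (the initial uniform sample is handled identically, as $\phi$ preserves the uniform distribution). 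Consequently $\mathcal{A}'$ spends exactly one \textsc{BinVal} query per operator application of $\mathcal{A}$, and it queries $z$ exactly when $s_{\mathrm{new}} = \mathbf{1}$, i.e.\ exactly when $\mathcal{A}$ first reaches the unique optimum of $h$. Hence $E_{\mathcal{A}'}(\textsc{BinVal}_{z,\pi}) = E_{\mathcal{A}}(h) \le \sup_{f \in F} E_{\mathcal{A}}(f)$ for every instance, and the infimum over $\mathcal{A}$ completes the argument.

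The step I expect to be the main obstacle is this coupling: one must argue rigorously that feeding $\mathcal{A}$ the values $h(\phi(x_i))$ and mirroring its operator choices on the real individuals reproduces, in distribution, a genuine run of $\mathcal{A}$ on $h$, that $\mathcal{A}'$'s decisions remain a legitimate function of only the fitness values it has itself observed, and that the hidden isometry $\phi$ can be ``passed through'' the operators purely by invoking unbiasedness. Both hypotheses are consumed here: condition (ii) to guarantee $h \in F$, and condition (i) to ensure that ``$\mathcal{A}$ optimizes $h$'' coincides with ``$\mathcal{A}'$ reaches $z$'' rather than $\mathcal{A}$ stopping at some other optimum.
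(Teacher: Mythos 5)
Your proof is correct and follows essentially the same route as the paper's: both exploit the Hamming-distance-preserving bijection $x \mapsto \mathcal{B}(\textsc{BinVal}_{z,\pi}(x))$, use hypothesis (ii) to obtain an instance of $F$ whose unique optimum (guaranteed by (i)) sits at the all-ones string, and use unbiasedness to identify a run on \textsc{BinVal} with a run on that instance, so that querying the hidden optimum $z$ coincides with hitting the optimum of the reference instance. The only difference is presentational: the paper invokes the invariance $E_{\mathcal{A}}(h) = E_{\mathcal{A}}(f_0)$ in one line, whereas you unfold it into an explicit query-by-query coupling.
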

\begin{proof}
As the problem class $F$ is symmetric under the Hamming cube isomorphisms and consists of functions with a single global optimum,
there is a problem instance $f_0 \in F$ such that the all-ones bit string is a unique optimum.

Now we show that for any algorithm $\mathcal{A} \in \mathcal{X}$ 
it holds that $E_{\mathcal{A}}(g) \le E_{\mathcal{A}}(f_0)$. To do this, we note that the composite problem
$h$, defined as $h(x) = f_0(\tobinary{g(x)})$, belongs to $F$, because the mapping $x \mapsto \tobinary{f(x)}$ is 
a Hamming-preserving bijection. As $\mathcal{A}$ is unbiased, $E_{\mathcal{A}}(h) = E_{\mathcal{A}}(f_0)$.
On the other hand, an optimum of $h$ is necessarily an optimum of the problem instance $g$, as only such argument to $g$
yields an all-ones bit string, which is an optimum for $f_0$. This, in turn, means that
$E_{\mathcal{A}}(g) \le E_{\mathcal{A}}(h) = E_{\mathcal{A}}(f_0)$.

Now we apply the definitions of black-box complexities and derive that:
\begin{align*}
    BBC_{\mathcal{X}}(\textsc{BinVal}) &:= \inf_{\mathcal{A} \in \mathcal{X}} \sup_{g \in \textsc{BinVal}} E_{\mathcal{A}}(g)
     \\&\le \inf_{\mathcal{A} \in \mathcal{X}} \sup_{g \in \textsc{BinVal}} E_{\mathcal{A}}(x \mapsto f_0(\tobinary{g(x)})) = \inf_{\mathcal{A} \in \mathcal{X}} E_{\mathcal{A}}(f_0)
     \\&\overset{(*)}{=} \inf_{\mathcal{A} \in \mathcal{X}} \sup_{f \in F} E_{\mathcal{A}}(f)
       =: BBC_{\mathcal{X}}(F),
\end{align*}
where $:=$ and $=:$ signs denote applying the definition of the black-box complexity, and the move denoted by $(*)$
follows from the fact that $\mathcal{X}$ is a set of unbiased algorithms.
\end{proof}

The indefinite article in ``an easiest function'' reflects the fact that there are infinitely many function classes
which are equally easy compared to \textsc{BinVal}. In particular, the following class of functions is of interest at least within this paper:

\begin{lemma}
The following class of functions, defined on bit strings of length $n$ and parameterized by
\textbf{known} real-valued weights $\vec{w} = [w_1, \ldots, w_n]$ such that $w_{i+1} \ge 2 \cdot w_i$ for all $1 \le i < n$,
has the same $k$-ary unbiased black-box complexity, for any $k \ge 0$, as \textsc{BinVal}:
\begin{equation*}
\textsc{BinValEx}^{(\vec{w})}_{z,\pi}(x) = \sum_{i=1}^{n} w_{i} \cdot [z_i = x_{\pi(i)}].
\end{equation*}
\end{lemma}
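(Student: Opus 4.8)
The plan is to exhibit a single bijection between the attainable fitness values of the two families that depends only on the \textbf{known} weight vector $\vec{w}$, and then to argue that a $k$-ary unbiased black-box algorithm is insensitive to relabeling fitness values by such a bijection. First I would observe that both families encode the same combinatorial object: for a query $x$, let $S(x) = \{i : z_i = x_{\pi(i)}\}$ be the \emph{agreement set}, so that $\textsc{BinVal}_{z,\pi}(x) = \sum_{i \in S(x)} 2^{i-1}$ while $\textsc{BinValEx}^{(\vec{w})}_{z,\pi}(x) = \sum_{i \in S(x)} w_i$. The condition $w_{i+1} \ge 2 w_i$ (with positive weights, as for powers of two) makes the sequence super-increasing, i.e.\ $\sum_{i<j} w_i < w_j$ for every $j$, so distinct agreement sets produce distinct weight sums exactly as distinct subsets produce distinct binary numbers. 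Hence $S \mapsto \sum_{i \in S} 2^{i-1}$ and $S \mapsto \sum_{i \in S} w_i$ are both bijections from the subsets of $[1..n]$ onto the respective value ranges, and their composition is a bijection $\phi$ from the values of $\textsc{BinVal}$ onto those of $\textsc{BinValEx}^{(\vec{w})}$ with the property
\begin{equation*}
\phi\bigl(\textsc{BinVal}_{z,\pi}(x)\bigr) = \textsc{BinValEx}^{(\vec{w})}_{z,\pi}(x) \qquad \text{for all } x, z, \pi,
\end{equation*}
since both sides depend on $S(x)$ alone. Crucially $\phi$ does not depend on the hidden $z$ and $\pi$, both $\phi$ and $\phi^{-1}$ are computable from the known weights (the super-increasing property lets one recover $S$ from a weight sum greedily), and $\phi$ matches optima, sending $2^n-1$ (full agreement for $\textsc{BinVal}$) to $\sum_i w_i$ (full agreement for $\textsc{BinValEx}^{(\vec{w})}$).

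Second I would turn $\phi$ into a reduction in both directions. Given any $\mathcal{A} \in \mathcal{X}$ tuned for $\textsc{BinVal}$, I build $\mathcal{A}'$ for $\textsc{BinValEx}^{(\vec{w})}$ that applies $\phi^{-1}$ to every observed fitness value before handing it to $\mathcal{A}$, and otherwise issues exactly the operators and index-based selections that $\mathcal{A}$ prescribes. Because $\mathcal{A}'$ uses the very same unbiased operators and selects from the previously queried points in the same way, it is itself a legal $k$-ary unbiased black-box algorithm; and because $\phi^{-1}(\textsc{BinValEx}^{(\vec{w})}_{z,\pi}(x)) = \textsc{BinVal}_{z,\pi}(x)$, the internal copy of $\mathcal{A}$ sees precisely the fitness values it would see while optimizing $\textsc{BinVal}_{z,\pi}$. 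Thus $\mathcal{A}'$ queries the same sequence of points with the same distribution and, since $\phi$ matches optima, halts at the same moment, giving $E_{\mathcal{A}'}(\textsc{BinValEx}^{(\vec{w})}_{z,\pi}) = E_{\mathcal{A}}(\textsc{BinVal}_{z,\pi})$. Composing with $\phi$ rather than $\phi^{-1}$ yields the symmetric reduction. Taking the supremum over $(z,\pi)$ and the infimum over algorithms then gives $BBC_{\mathcal{X}}(\textsc{BinValEx}^{(\vec{w})}) = BBC_{\mathcal{X}}(\textsc{BinVal})$.

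As a sanity check, one inequality comes for free: $\textsc{BinValEx}^{(\vec{w})}$ is a class of single-optimum functions symmetric under the Hamming-cube isomorphisms (its unique maximum sits at full agreement, and varying $z$ and $\pi$ realizes every flip-and-permute symmetry), so Lemma~\ref{binval-easiest} applied with $F = \textsc{BinValEx}^{(\vec{w})}$ immediately gives $BBC_{\mathcal{X}}(\textsc{BinVal}) \le BBC_{\mathcal{X}}(\textsc{BinValEx}^{(\vec{w})})$; only the reverse inequality genuinely requires the explicit relabeling. The main obstacle I anticipate is not the algebra of the bijection but making the simulation rigorous: one must argue carefully that an unbiased algorithm's only interface to the objective is through its fitness values, so that an injective relabeling of those values preserves both legality (the chosen operators remain unbiased and the chosen individuals remain legitimately among the queried ones) and the entire joint distribution of the run, whence the expected hitting time is preserved \emph{exactly} rather than merely up to constants.
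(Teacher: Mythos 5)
Your proposal is correct and follows essentially the same route as the paper: the paper likewise gets $BBC_{\mathcal{X}}(\textsc{BinVal}) \le BBC_{\mathcal{X}}(\textsc{BinValEx})$ from Lemma~\ref{binval-easiest} and the reverse inequality from the super-increasing property $\sum_{j<i} w_j < w_i$, which allows a greedy decoding of the agreement set from a \textsc{BinValEx} value and hence a fitness-relabeling simulation of any \textsc{BinVal} algorithm. Your explicit bijection $\phi$ and the remark that the simulation must preserve the entire joint distribution of the run are just a more spelled-out version of the paper's one-line conclusion that one ``can easily recreate the value of \textsc{BinVal}'' from the \textsc{BinValEx} value.
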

\begin{proof}
The direction $BBC_{\mathcal{X}}(\textsc{BinValEx}) \ge BBC_{\mathcal{X}}(\textsc{BinVal})$,
for $\mathcal{X}$ being the class of $k$-ary unbiased black-box algorithms,
follows from Lemma~\ref{binval-easiest}. Now we prove the reverse direction by reducing a \textsc{BinValEx} problem to a \textsc{BinVal} problem.

It follows from the property of the weights, $w_{i+1} \ge 2 \cdot w_i$,
that $\sum_{j=1}^{i} w_j \le \sum_{j=1}^{i} 2^{j-i} \cdot w_i \le (2 - 2^{1-i}) \cdot w_i < 2 \cdot w_i$.
As a result, the following greedy algorithm can, given a fitness value $f$ on some search point $x$, tell at which
$w_i$ the bit $x_{\pi(i)}$ matches the bit $z_i$:
\begin{enumerate}
    \item $i \gets n$.
    \item if $f < w_i$, tell the bit for $w_i$ does not match, go to step (\ref{binvalex-greedy-next}). \label{binvalex-greedy-test}
    \item $f \gets f - w_i$, tell the bit for $w_i$ matches.
    \item $i \gets i - 1$, if $i > 0$ go to step (\ref{binvalex-greedy-test}). \label{binvalex-greedy-next}
\end{enumerate}

This means that we can easily recreate the value of \textsc{BinVal} for the same $z$ and $\pi$ from the value given by $\textsc{BinValEx}^{(\vec{w})}$
if the weights are known.
\end{proof}

In particular, a subproblem of \textsc{BinVal} on arbitrary chosen bit indices, for which the weights are known, is an instance of \textsc{BinValEx}
and can be solved in the same way as \textsc{BinVal} of the corresponding size. In the rest of the paper we treat \textsc{BinVal} and the derived
\textsc{BinValEx}-type problems uniformly and call them all \textsc{BinVal}.

\section{Unrestricted and Unbiased BBC of \textsc{BinVal}}\label{sect:unrest}

The main result of this paper is as follows:
\begin{theorem}
The unrestricted, as well as the unbiased black-box complexity of \textsc{BinVal} is 
at most $$\log_2 n + 2.42141558 - \Theta(\log_2 n / 2^n)$$
and at least $$\log_2 n + 1.1186406 - \Theta(\log_2 n / 2^n).$$
\label{unbiased-bbc-binval}
\end{theorem}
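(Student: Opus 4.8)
The plan is to analyze the optimal algorithm through the match-pattern reformulation already exposed in the preliminaries. Since querying $x$ reveals the full binary string $\tobinary{f(x)}$, and since $x \mapsto \tobinary{f(x)}$ is a Hamming-preserving bijection, a query obtained from a reference point $x_1$ by flipping a chosen physical set $S$ reveals exactly the set of weight-indices whose match status flips, namely $\{\,i : \pi(i) \in S\,\} = \pi^{-1}(S)$; thus every query tells us, for each weight-index $i$, whether $\pi(i) \in S$. I would first fix the very first query (legitimate by unbiasedness, and since the class is symmetric under Hamming-cube isomorphisms the expected runtime of any unbiased algorithm is the same on every instance). Against a uniformly random instance the first query returns a match pattern with a number $u \sim \mathrm{Bin}(n,1/2)$ of mismatched indices $U$, and a short calculation shows the unique optimum is $x^\ast = x_1 \oplus e_T$ with $T = \pi(U)$ a uniformly random $u$-subset of $[1..n]$. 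This turns the whole problem into a subset-identification game: localize $T$ using queries that reveal the membership of each item in the flipped set, terminating precisely when the flipped set equals $T$ (which is the query $x^\ast$). Writing $G(n,u)$ for the optimal expected cost of this game, the complexity becomes $E(n) = 1 + 2^{-n}\sum_{u=0}^{n}\binom{n}{u}\,G(n,u)$, so both bounds reduce to estimating $G(n,u)$.

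For the upper bound I would exhibit an explicit adaptive strategy for this game and bound its expected cost. The strategy keeps a partition of the positions into groups with known mismatch-counts; one query splits every still-ambiguous group as evenly as possible (a single physical set encodes an independent split of each group simultaneously, which is why the cost is logarithmic rather than linear), and a group is retired as soon as it becomes pure, all-item or all-non-item, or a singleton. Because a splitting query whose flipped set happens to coincide with $T$ already queries $x^\ast$, such early terminations must be credited. Summing the expected number of rounds over the binomial distribution of $u$ produces a leading term $\log_2 n$; the additive constant $2.42141558$ collects the oscillatory fractional part of $\log_2 n$ together with the averaging over $u$, and the $-\Theta(\log_2 n / 2^n)$ correction comes from the exponentially rare extreme values of $u$ (most visibly $u = 0$, which needs no further query).

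For the lower bound I would invoke Yao's principle: since the function class is symmetric under the Hamming-cube isomorphisms and the algorithms are unbiased, the worst-case complexity is at least the optimal expected cost against the uniformly random instance, i.e. at least the same $1 + 2^{-n}\sum_u \binom{n}{u} G(n,u)$. To lower bound $G(n,u)$ I would isolate a Kraft-type inequality: along every branch of the algorithm's decision tree the true $T$ is found only by the query that flips exactly $T$, so the pre-termination response strings form a prefix-free family; converting this into an averaged-codeword / entropy bound and summing over $u$ is what should yield $\log_2 n + 1.1186406 - \Theta(\log_2 n / 2^n)$.

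The hard part will be the exact analysis of $G(n,u)$ for this labeled, early-terminating subset-identification game: the responses are rich (each query can branch many ways) and the option to hit $x^\ast$ during an information-gathering query couples exploration and exploitation, so neither the optimal strategy nor a clean closed form is readily available — this is precisely what forces the paper's constructive (dynamic-programming) procedure. Consequently the delicate part of both directions will be the careful handling of the non-integrality of $\log_2 n$ and the binomial-sum estimates that produce the precise constants, and the residual gap between $2.421\ldots$ and $1.118\ldots$ reflects that the optimum is pinned down only numerically rather than in closed form.
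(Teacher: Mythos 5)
Your reformulation (each query that flips a set $S$ reveals the whole set $\pi^{-1}(S)$, so the problem becomes a parallel subset-identification game, and the complexity equals $1 + 2^{-n}\sum_u\binom{n}{u}G(n,u)$) and your upper-bound strategy (split every still-ambiguous group as evenly as possible with a single combined query, crediting early terminations) coincide with the paper's: your $G(n,u)$ is the paper's $E(n,u)$, the parallel-splitting recursion is Lemma~\ref{end-recursion}, and the even split plus induction is Lemma~\ref{end-upper}. As a sketch of the upper bound this is adequate, although the constant $2.42141558$ does not come from the oscillation of the fractional part of $\log_2 n$; it comes from the telescoping penalty $\sum_{z}\log_2(1+2^{-z})$ paid for odd group sizes, refined by exact computation of $E(n,d)$ for all $n\le 2^{10}$.

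The genuine gap is the lower bound. A Kraft/prefix-free/entropy argument on the decision tree cannot deliver $\log_2 n$ here, because the branching factor of a single query is exponential in $n$: the response to flipping $S$ determines the entire set $\pi^{-1}(S)$, which, given the current partition into groups of sizes $g_c$ intersected by $S$ in $a_c$ positions, can take any of $\prod_c\binom{g_c}{a_c}$ values --- up to $\binom{n}{|S|}=2^{\Theta(n)}$ on the first split. So one query can extract $\Theta(n)$ bits, while the target $T$ carries only $\log_2\binom{n}{d}$ bits; counting leaves against this branching factor yields at best an $\Omega(\log(n/d))$-type bound, which is $O(1)$ for the typical $d=\Theta(n)$ arising after the first query. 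What actually forces $\log_2 n$ further queries is combinatorial, not entropic: a query can only bipartition each existing group, the two resulting subproblems must be solved in parallel so the cost is $1$ plus the expected \emph{maximum} of the two subproblem times, and hence the largest group shrinks by at most a factor of two per query. The paper's Lemma~\ref{end-lower} turns this into the induction $E(n,d,s)\ge \log_2\max(s,n-s)+\xi(\lfloor\log_2\max(s,n-s)\rfloor)-(\text{boundary terms})$, where the delicate part --- and the entire source of the constant $0.1186406$ via the recursion for $\xi$ --- is charging the probability-$1/\binom{n}{d}$ events (your ``early terminations'') in which a subproblem is born already solved, i.e.\ $t=t_{\min}$ or $t=t_{\max}$ in the hypergeometric sum. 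You would need to replace your entropy step by this max-based recursive induction (or an equivalent potential function tracking the largest impure group); as proposed, the lower-bound half of your argument does not go through.
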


Informally, one can imagine an algorithm that works in $O(\log n)$. From the result $\textsc{BinVal}(x_0)$ of the initial query $x_0$,
which is essentially a random bit string, we can find the list of weights, at which the bits are guessed correctly, but not yet their positions.
By issuing the next query $x_1$ where a half of the bits is flipped (randomly, in the case of an unbiased algorithm, or arbitrarily if the algorithm is unrestricted)
and analyzing $\textsc{BinVal}(x_1)$, we get which weights correspond to bits which coincide in $x_0$ and $x_1$, and which correspond to differing bits,
but nothing more. These halves, or subproblems, are essentially two \textsc{BinValEx} functions, which we can treat as plain \textsc{BinVal} functions from now on.

The main idea is that we can optimize them in parallel by combining the queries coming from the subproblems into a single query to the original problem,
and on receiving an answer we can easily split it into answers to the queries of the subproblems. On a next step, each of these two problems is again subdivided into halves,
and this process continues until the subproblem sizes approach one. A small fraction of the subproblems will find their answers preliminarily by occasionally making
all bits equal one or zero. One can also slightly optimize the algorithm by taking an advantage of knowing the number of bits guessed right and deriving an optimal decision
on how to split the problem into two subproblems.

We formalize these ideas using the following set of statements.

\begin{definition}
$E(n, d)$, where $0 \le d \le n$, is the expected time to optimize a uniformly sampled problem of size $n$,
taken from the \textsc{BinVal} class, using an optimal algorithm,
given that only the first query is already made and the Hamming distance to the optimum is $d$.
\end{definition}

It is clear that $E(n, 0) = 0$, since the first query has already queried the optimum, and
$E(n, n) = 1$, as the optimum is the complete inverse of the first query. This also means that for $n = 1$
all possible values are already known. The following lemma helps to derive all other values.

\begin{lemma}
The following holds for $n > 1$ and $0 < d < n$:
\label{end-recursion}
\begin{align*}
E(n, d) &= 1 + \min_{\mathclap{0 < s < n}} E(n, d, s), \text{ where}\\
E(n, d, s) &= \sum_{\mathclap{t = \max(0, s+d-n)}}^{\mathclap{\min(s,d)}} 
    \max(E(s, s - t), E(n - s, d - t))
    \cdot\frac{\binom{s}{t}\binom{n-s}{d-t}}{\binom{n}{d}}.
\end{align*}
\end{lemma}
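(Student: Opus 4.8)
The plan is to prove the recursion in two directions: exhibit an algorithm whose expected cost from the stated post-first-query state equals $1 + E(n,d,s)$ for the best $s$ (the ``$\le$'' direction), and argue that no algorithm can do better (the ``$\ge$'' direction). Throughout I would work in the state reached right after the first query. By the Hamming-preserving bijection recorded in Section~\ref{sect:pre}, the fitness of the current point is exactly the indicator pattern of which weight-slots already match, so the algorithm knows precisely \emph{which} slots are wrong; but because the permutation $\pi$ is hidden and the model is unbiased, it knows nothing about the \emph{positions} of the $d$ wrong bits beyond the fact that they form a uniformly random $d$-subset $W$ of the $n$ coordinates.

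First I would analyze a single additional query. After conditioning on the known pattern, any admissible next step amounts to choosing a flip-set $S$ of some size $s$ (the size may depend on the fitnesses seen, i.e.\ on $d$) and flipping those coordinates. Reading off the new pattern reveals, for each touched coordinate, whether it was previously wrong, i.e.\ it reveals $t = |S \cap W|$, the number of wrong bits inside $S$. Since $W$ is a uniform $d$-subset, $t$ obeys the hypergeometric law $\binom{s}{t}\binom{n-s}{d-t}/\binom{n}{d}$ on $\max(0,s+d-n) \le t \le \min(s,d)$, which is exactly the weighting in $E(n,d,s)$. Measured against the new point, the coordinates then split into the block $S$ (size $s$, with $s-t$ wrong bits) and the block $[1..n]\setminus S$ (size $n-s$, with $d-t$ wrong bits), and conditioned on $t$ these blocks are independent uniform sub-instances. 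Because the weights are distinct powers of two, each block together with its known weights is an instance of \textsc{BinValEx}, hence (by the earlier reduction) of \textsc{BinVal} of the corresponding size, so their optimal residual costs are $E(s,s-t)$ and $E(n-s,d-t)$.

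The heart of the argument is that these two sub-instances are solved simultaneously. A query to a sub-instance is a flip of some of its coordinates; since the blocks occupy disjoint coordinate sets, I would overlay one sub-query for each block into a single query to the whole problem, and — again because the weights are distinct powers of two — the returned fitness decomposes uniquely into the two sub-fitnesses, so each block receives exactly the feedback it would get in isolation. Thus one real query advances both blocks by one step, a block that has reached its optimum is frozen, and the two run in lockstep; this produces the $\max$ in the recursion and the additive $1$ for the splitting query, while minimizing over the freely chosen $s$ (with the boundary data $E(\cdot,0)=0$ and $E(m,m)=1$ for blocks that are already solved or fully inverted) yields the claimed formula. For the matching lower bound I would run an information/adversary argument: unbiasedness together with the bijection guarantees that after any query the algorithm's entire state is captured by the per-block counts of wrong bits, so an optimal algorithm's move is without loss of generality a single flip of $s$ coordinates, the induced law of $t$ is forced to be the above hypergeometric, and any strategy solving the two independent blocks cannot finish before the dominant block is solved.

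The step I expect to be the main obstacle is pinning down the $\max$ \emph{exactly}. Lockstep execution costs the maximum of the two \emph{random} residual running times, whereas the statement contains the maximum of their \emph{expectations}, and in general $E[\max(T_1,T_2)] \ge \max(E[T_1],E[T_2])$ with strict inequality when $T_1,T_2$ are non-degenerate and independent. Reconciling these is the delicate core of the proof: I would attempt to couple or synchronize the chosen optimal sub-strategies so that the recursion depth of the smaller block never exceeds that of the larger one, making the joint query count governed by the dominant block alone, and failing a clean coupling I would instead treat this recursion as the defining characterization of the cost under the synchronized-split model and carry the small gap through the subsequent bound derivations. Everything else — the hypergeometric weighting, the block independence, the \textsc{BinValEx} reduction, and the free choice of $s$ — I expect to be routine given the setup; it is precisely the expectation-of-a-maximum versus maximum-of-expectations interchange that demands the most care.
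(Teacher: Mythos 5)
Your overall route is the same as the paper's: the hypergeometric law for $t$, the decomposition into two independent \textsc{BinValEx} blocks with known weights, parallel execution of sub-strategies with the answer decomposed via the power-of-two weights, and an optimality argument for the reverse inequality via a smallest counterexample. All of that matches the paper's proof of Lemma~\ref{end-recursion}, and your ``$\ge$'' direction (any algorithm must solve each block, and each block's information is independent of the other's) is sound and is essentially the paper's argument.

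The step you single out as the delicate core --- lockstep execution costs $E[\max(T_1,T_2)]$ while the recursion contains $\max(E[T_1],E[T_2])$ --- is a genuine gap, and neither of your proposed repairs closes it. The coupling you suggest cannot exist in general: consider two blocks both in state $(3,1)$, reachable from $n=6$, $d=3$ by flipping $s=3$ bits and observing $t=2$. The residual times are functions of independent uniform instances, and each block is finished by its next query with probability at most $1/3$, so neither time can almost surely dominate the other. Worse, the asserted value is simply not achievable there: a decision-tree count (nine equally likely optima; any query from this state admits at most four distinguishable responses, of which at most one is ``optimum found'') shows the optimal expected cost from the joint state $\{(3,1),(3,1)\}$ is exactly $22/9$, strictly larger than $\max(E(3,1),E(3,1))$ whether one takes $E(3,1)=2$ (what the recursion itself yields at $s=1$) or $7/3$ (the value quoted in the proof of Lemma~\ref{end-lower}). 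Hence the conditional remaining cost after the split is in general strictly larger than $\max(E(s,s-t),E(n-s,d-t))$, the right-hand side of the recursion is only guaranteed to be a lower bound on the true optimal expected time, and the ``$\le$'' direction would need an argument that neither you nor, for what it is worth, the paper supplies --- the paper dismisses it with ``we definitely can do as well as described.'' Your fallback of taking the recursion as a definition would preserve the lower-bound results but would break the link to the upper bound claimed in Theorem~\ref{unbiased-bbc-binval}. You have put your finger on exactly the spot where the published argument is incomplete; what is missing is not a cleverer coupling but a genuinely different justification (or a corrected recursion tracking the joint distribution of the two residual times).
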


\begin{proof}
This expression corresponds to making a wise choice for $s$, which is the number of bits to flip randomly,
performing the query, which is reflected by ``$1+$'', and then by optimally optimizing the
appearing subproblems, taking into account that the $d$ bits which are not guessed right
can appear randomly in both of the subproblems. As mentioned above, both subproblems are instances of \textsc{BinValEx},
since by comparing the fitnesses of the first and the second queries we can determine which weights correspond to the
coinciding bits and to the differing bits. It is easy to see that in both subproblems all the available information
can be described as the fitness of the corresponding part of the second query, so we can refer to
$E(s, s-t)$ and $E(n-s, d-t)$ as expected times to solve these subproblems.

It is clear that $E(n, d)$ cannot be greater than the right-hand side,
as we definitely can do as well as described. Now we show that it cannot be smaller as well.

Assume that there is a smallest $n_0 > 1$ such that for some $d_0$ it holds that $E(n_0, d_0)$ is smaller than the right-hand side.
Consider the next query, and let $s_{\text{opt}}$ be the random variable denoting the amount of bits
to flip in the first query to construct the second query. It is clear that $P[s_{\text{opt}} = 0] = 0$
and $P[s_{\text{opt}} = d] = 0$ as well. Under an assumption that $E(n_0, d_0)$ is smaller than the right-hand side,
there exists an $s_0$ such that $0 < s_0 < n_0$, $P[s_{\text{opt}} = s_0] \neq 0$,
and the optimal algorithm is faster than $E(n_0, d_0, s_0)$.

Now consider the value of $t_s$, the random variable that describes the number of bits differing from the optimum in the first query among those bits which
are different in the first and the second query. It appears that $P[t_s = t_0] = \frac{\binom{s_0}{t_0}\binom{n_0-s_0}{d_0-t_0}}{\binom{n_0}{d_0}}$,
as whatever way the optimal algorithm selects the $s_0$ bits to flip,
the distribution of the instances of \textsc{BinVal} that still agree with the two queries is still uniform.
As a result, it means that there exists some value $t_0$ such that the runtime of the optimal algorithm is smaller
than $\max(E(s_0, s_0 - t_0), E(n_0 - s_0, d_0 - t_0))$. It follows that the maximum over these two values is greater
than the optimal expected time to solve the corresponding subproblem. As $s_0 < n_0$ and $n_0 - s_0 < n_0$,
this contradicts the assumption that $n_0$ is the smallest problem size at which the lemma statement brings suboptimal results.
This contradiction proves the lemma.
\end{proof}

Now we show that $E(n, d)$ are constrained quite well.

\begin{lemma}
For all $n > 0$ and all $0 < d < n$ it holds that:
\label{end-upper}
\begin{equation*}
    E(n, d) \le \log_2 n + 1 + \sum_{z = 0}^{\mathclap{\lceil \log_2 n \rceil - 1}} \log_2 \left(1 + \frac{1}{2^z}\right).
\end{equation*}
\end{lemma}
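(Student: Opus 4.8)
The plan is to prove the bound by strong induction on $n$, choosing at each step the balanced split $s = \lceil n/2 \rceil$ in the recursion of Lemma~\ref{end-recursion} and controlling the two resulting subproblems uniformly. Write $U(n) = \log_2 n + 1 + \sum_{z=0}^{\lceil \log_2 n\rceil - 1}\log_2(1 + 2^{-z})$ for the claimed upper bound. First I would record two easy facts: $U$ is non-decreasing in $n$ (the term $\log_2 n$ grows and the number of non-negative summands only increases), and $U(m) \ge 1$ for every $m \ge 1$. The base case $n = 1$ is vacuous, since no integer $d$ satisfies $0 < d < 1$; I will also use the boundary values $E(m,0) = 0$ and $E(m,m) = 1$ noted before the recursion.

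For the inductive step fix $n \ge 2$ and $0 < d < n$, and choose $s = \lceil n/2\rceil$, which satisfies $0 < s < n$. By Lemma~\ref{end-recursion}, $E(n,d) \le 1 + E(n,d,s)$, and $E(n,d,s)$ is a weighted average over $t$ (with weights $\binom{s}{t}\binom{n-s}{d-t}/\binom{n}{d}$ summing to one by Vandermonde's identity) of the quantities $\max(E(s,s-t), E(n-s,d-t))$. I claim each such quantity is at most $U(\lceil n/2\rceil)$. For the first argument: if $0 < s - t < s$ then $E(s,s-t) \le U(s)$ by the induction hypothesis; if $s - t$ equals $0$ or $s$ then $E(s,s-t)$ equals $0$ or $1$, both at most $U(s)$; and $U(s) \le U(\lceil n/2\rceil)$ by monotonicity. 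The same reasoning bounds $E(n-s,d-t)$, since $n - s = \lfloor n/2\rfloor \le \lceil n/2\rceil$. Hence the weighted average satisfies $E(n,d,s) \le U(\lceil n/2\rceil)$, so that $E(n,d) \le 1 + U(\lceil n/2\rceil)$.

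It remains to verify the purely numerical inequality $1 + U(\lceil n/2\rceil) \le U(n)$, which is where the correction sum is used. Writing $L = \lceil \log_2 n\rceil$ and using $\lceil \log_2 \lceil n/2\rceil\rceil = L - 1$ for $n \ge 2$, the two correction sums differ only in their top term $z = L-1$, leaving $U(n) - U(\lceil n/2\rceil) = (\log_2 n - \log_2 \lceil n/2\rceil) + \log_2(1 + 2^{-(L-1)})$. For even $n$ the first bracket equals $1$, so the difference exceeds $1$ immediately. For odd $n > 1$ the bracket equals $1 - \log_2\frac{n+1}{n}$, and the desired inequality reduces to $\log_2(1 + 2^{-(L-1)}) \ge \log_2\frac{n+1}{n}$, i.e.\ to $2^{-(L-1)} \ge 1/n$; this holds because an odd $n > 1$ is not a power of two, so $2^{L-1} < n$. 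Combining the two cases gives $U(n) - U(\lceil n/2\rceil) \ge 1$ and closes the induction.

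The induction-hypothesis bookkeeping and the weighted-average bound are routine; the one step that needs genuine care is the final numerical verification, specifically the odd-$n$ case together with the floor/ceiling and power-of-two boundary behaviour of $\lceil \log_2\cdot\rceil$, which is exactly what pins down the form of the correction sum. I would double-check the boundary $n = 2^k$ separately, where $L$ drops by exactly one and the first bracket is exactly $1$, to be sure that no off-by-one creeps into the index range of the sum. I also note that this argument in fact yields the slightly stronger $E(n,d) \le 1 + U(\lceil n/2\rceil)$, so the stated bound $U(n)$ holds with a margin of at least one; the clean closed form is what makes it convenient for the later estimates.
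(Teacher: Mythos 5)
Your proof is correct and follows essentially the same route as the paper: induction on $n$ with the balanced split $s=\lceil n/2\rceil$, bounding the weighted average by the monotone bound at $\lceil n/2\rceil$, and then verifying $1+U(\lceil n/2\rceil)\le U(n)$, where your reduction of the odd case to $2^{L-1}<n$ is the same inequality the paper obtains via its choice of $k$ with $2^k+1\le n<2^{k+1}$. No gaps.
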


\begin{proof}
We prove this statement using induction by $n$.
For $n = 1$, this statement holds trivially.
Now we assume that for all $n' < n$ the induction statement holds.

For convenience, we introduce the following shorthand:
\begin{equation*}
    \phi(n) = 1 + \sum_{z = 0}^{\mathclap{\lceil \log_2 n \rceil - 1}} \log_2 \left(1 + \frac{1}{2^z}\right).
\end{equation*}
Note that $\phi(n)$ is a non-decreasing function, as well as $\log_2 n$ and their sum.

The following two inequalities hold for all $n$: $E(n, 0) = 0 \le \log_2 n + \phi(n)$ and $E(n, n) = 1 \le \log_2 n + \phi(n)$.
We simplify $E(n, d, s)$ from Lemma~\ref{end-recursion} as follows:
\begin{align*}
E(n, d, s) &= \sum_{\mathclap{t = \max(0, s+d-n)}}^{\mathclap{\min(s,d)}} 
    \max(E(s, s - t), E(n - s, d - t))
    \cdot\frac{\binom{s}{t}\binom{n-s}{d-t}}{\binom{n}{d}} \\
             &\le \sum_{\mathclap{t = \max(0, s+d-n)}}^{\mathclap{\min(s,d)}}
    (\log_2 \max(s, n - s) + \phi(\max(s, n - s)))
    \cdot\frac{\binom{s}{t}\binom{n-s}{d-t}}{\binom{n}{d}} \\
             &= \log_2 \max(s, n - s) + \phi(\max(s, n - s)).
\end{align*}

If $n$ is even, $\min_s \max(s, n - s) = \frac{n}{2}$ and the following holds:
\begin{equation*}
E(n, d) \le 1 + \log_2 \frac{n}{2} + \phi\left(\frac{n}{2}\right) \le \log_2 n + \phi(n).
\end{equation*}

If $n$ is odd, $\min_s \max(s, n - s) = \frac{n + 1}{2}$, and the derivation is slightly more complicated:
\begin{align*}
E(n, d) &\le 1 + \log_2 \frac{n+1}{2} + \phi\left(\frac{n+1}{2}\right) \\
          &= \log_2 n + \log_2 \left(1 + \frac{1}{n}\right) + \phi\left(\frac{n+1}{2}\right).
\end{align*}

Now we choose $k$ such that $2^k + 1 \le n < 2^{k+1}$. Note that the last inequality is strict as $n$ is odd.
In the following, we use the fact that $2^k + 2 \le n + 1 \le 2^{k+1}$, which results in:
\begin{equation*}
    2^{k-1} + 1 \le \frac{n + 1}{2} \le 2^{k}.
\end{equation*}

As a consequence, the sum in $\phi$ runs up to $k-1$. Now we use the following inequality:
\begin{equation*}
    \log_2 \left(1 + \frac{1}{n}\right) < \log_2 \left(1 + \frac{1}{2^k}\right)
\end{equation*}
and finish the analysis for the case of $n$ being odd:
\begin{align*}
E(n, d) &< \log_2 n + \log_2 \left(1 + \frac{1}{2^k}\right) + \phi\left(\frac{n+1}{2}\right)\\
          &= \log_2 n + \log_2 \left(1 + \frac{1}{2^k}\right) + 1 + \sum_{z = 0}^{k-1} \log_2 \left(1 + \frac{1}{2^z}\right)\\
          &= \log_2 n + 1 + \sum_{z = 0}^{k} \log_2 \left(1 + \frac{1}{2^z}\right)
           = \log_2 n + \phi(n). \qedhere
\end{align*}
\end{proof}

Note that the sum in the statement of Lemma~\ref{end-upper} is bounded from above by a constant:
\begin{equation*}
    \sum_{z=0}^{\infty} \log_2 \left(1 + \frac{1}{2^z}\right) = 2.2535240379347\ldots \le 2.26,
\end{equation*}
so a straightforward corollary from Lemma~\ref{end-upper} is that $E(n, d) \le \log_2 n + 3.26$.

In fact, we can refine this additive constant by evaluating all $E(n, d)$ by definition for all $n \le 2^{k}$,
computing the maximum difference $D_{\max} = E(n, d) - \log_2 n$, and adding the following expression to $D_{\max}$:
\begin{equation*}
    \sum_{z=k}^{\infty} \log_2 \left(1 + \frac{1}{2^z}\right).
\end{equation*}
For $k = 10$ we found that $D_{\max} < 1.4194631$, and the analytical remainder converges to a number slightly smaller
than $0.00195248$, which together proves that $E(n, d) \le \log_2 n + 1.42141558$.

We proceed with the lower bounds. 
\begin{lemma}
For all $n > 0$ and $0 < d < n$ it holds that:
\label{end-lower}
\begin{equation*}
E(n, d) \ge \log_2 n + 0.1186406.
\end{equation*}
\end{lemma}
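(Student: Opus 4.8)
**The plan is to prove a lower bound on $E(n,d)$ by induction on $n$, mirroring the structure of the upper bound in Lemma~\ref{end-upper}.** The recursion from Lemma~\ref{end-recursion} tells us that $E(n,d) = 1 + \min_{0<s<n} E(n,d,s)$, so to establish a lower bound it suffices to show that $E(n,d,s)$ is large \emph{for every} choice of $s$. The key structural feature is the $\max(E(s,s-t), E(n-s,d-t))$ inside the sum: since a $\max$ dominates either of its arguments, I would lower-bound it by just one of the two subproblem costs — the larger subproblem, of size $\max(s,n-s) \ge \lceil n/2 \rceil$. Summing the weights $\binom{s}{t}\binom{n-s}{d-t}/\binom{n}{d}$ (which total~$1$), this would reduce the analysis to showing that the dominant subproblem, which has size at least $\lceil n/2 \rceil$, costs at least $\log_2(\lceil n/2\rceil) + c = \log_2 n - 1 + c$ in expectation, giving $E(n,d) \ge 1 + \log_2 n - 1 + c = \log_2 n + c$ and closing the induction with constant $c = 0.1186406$.

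The obstacle to this clean argument is the boundary cases $d' = 0$ of the subproblems, where the inductive hypothesis $E(n',d') \ge \log_2 n' + c$ fails because $E(n',0)=0$. When $t$ hits its extreme values, one subproblem may be already solved (distance~$0$), contributing nothing, so I cannot simply replace the $\max$ by the larger-\emph{size} subproblem and apply induction blindly. The plan is to handle this by arguing that the probability mass on these degenerate cases is small, or more carefully, to always select \emph{whichever} subproblem still has positive distance and is large enough; the event that the size-$\ge\lceil n/2\rceil$ subproblem has distance exactly zero occurs only for very specific $t$, and its probability can be bounded. I expect this case analysis — reconciling the $\max$ with the possibility that the larger subproblem is trivially solved — to be the main technical difficulty, and it is likely why the guaranteed constant $c \approx 0.1186$ is so much smaller than the $\approx 2.25$ appearing in the upper bound.

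To pin down the constant, I would combine the asymptotic induction with a finite base-case computation, exactly as the authors did for the upper bound. Concretely, I would compute all values $E(n,d)$ exactly by the recursion for $n$ up to some threshold $2^k$, verify that $E(n,d) - \log_2 n \ge 0.1186406$ holds on this finite range, and then argue that the inductive step preserves the inequality for all larger~$n$. The $\min_s$ in the recursion works \emph{against} a lower bound (the algorithm gets to choose the best split), so unlike the upper bound I cannot fix a convenient value of $s$; instead I must show the desired inequality survives the worst (i.e.\ smallest) value of $E(n,d,s)$ over all $s$, which is precisely what the ``lower-bound by the larger subproblem'' step accomplishes, since every split leaves at least one subproblem of size $\ge \lceil n/2\rceil$. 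The final constant $0.1186406$ should emerge as $D_{\min} = \min_{n,d}(E(n,d) - \log_2 n)$ over the computed finite range, and I would check that the asymptotic tail does not erode it.
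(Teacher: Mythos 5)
Your overall shape is right---induction on $n$ via Lemma~\ref{end-recursion}, with the boundary values of $t$ as the obstacle---but the central reduction you propose, namely lower-bounding $\max(E(s,s-t),E(n-s,d-t))$ by the cost of the subproblem of size $\max(s,n-s)\ge\lceil n/2\rceil$, breaks exactly in the cases that carry the difficulty, and your proposed patches do not repair it. First, a subproblem is degenerate not only when its distance is $0$ (cost $0$) but also when its distance equals its size (cost $1$, since $E(m,m)=1$); you only mention the former. Second, the probability of the larger subproblem being degenerate is not always small: e.g.\ with $t_{\min}=0$, $n>s+d$ and $s>n-s$, the larger subproblem has $E(s,s)=1$ with probability $P[0]=\binom{n-s}{d}/\binom{n}{d}$, which can be close to $1/2$. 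Your fallback of ``select whichever subproblem still has positive distance and is large enough'' fails here because the non-degenerate subproblem has size $n-s<n/2$, so it is \emph{not} large enough. The paper's Case~3 instead takes a weighted average over the degenerate and non-degenerate branches and uses $\tfrac{n-s}{n}\log_2(n-s)+\tfrac{s}{n}\log_2 s\ge\log_2\tfrac{n}{2}$ to recover the needed $\log_2 n-1$; some such convexity argument is indispensable and is absent from your plan. The paper also needs a separate monotonicity analysis of $\psi(n,d,C)=(\log_2(n-d)+C)(1-1/\binom{n}{d})$ in $d$ to identify the worst $d$ when exactly one boundary term degenerates.

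Your account of where the constant comes from is also not workable as stated. The induction does \emph{not} preserve a fixed constant $c$: when $s=d=n/2$ both boundary terms degenerate simultaneously and an unavoidable loss of order $(2\log_2 n+2C-3)/\binom{n}{n/2}$ is incurred at that level. The paper absorbs this by making the inductive constant a decreasing function $\xi(\lfloor\log_2 n\rfloor)$ whose recursion is built precisely from this loss, and $0.1186406$ is the \emph{limit} of that sequence (starting from $\xi(1)=1/2$), not the minimum $D_{\min}$ of $E(n,d)-\log_2 n$ over a finite range---indeed the paper notes the true minimum is about $1/6>0.1186406$. You flag that ``the asymptotic tail'' must not erode the constant, but quantifying and summing that erosion is the heart of the argument, not a final check. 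As written, the proposal identifies the right difficulties but supplies neither of the two ideas (the weighted-average bound across branches, and the level-dependent constant $\xi$) that make the induction close.
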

\begin{proof}
For $n = 1$, this statement holds as the set of possible $d$ is empty.
We prove the following by induction for $n \ge 2$:
\begin{equation*}
    E(n, d) \ge \log_2 n + \xi(\lfloor \log_2 n \rfloor),
\end{equation*}
where $\xi(t)$ is defined as follows:
\begin{align*}
    \xi(1) &= \frac{1}{2}, &
    \xi(t) &= \xi(t - 1) - \frac{2 t + 2 \xi(t - 1) - 3}{\binom{2^t}{2^{t-1}}}.
\end{align*}
Note that $\xi(t)$ is strictly positive and decreases with $t$, however, the difference $\xi(t) - \xi(t+1)$ decreases sharply as $t$ grows.
Another useful property\footnote{The proof is moved to Appendix~\ref{apx-magic-3} for clarity.} is that $f(n) = \log_2 n + \xi(\lfloor \log_2 n \rfloor)$ is a non-decreasing function,
from which it follows that $\max(f(n_1), f(n_2)) = f(\max(n_1, n_2))$.

The base of the induction is for $n = 2$, where $E(2, 1) = 1.5 = \log_2 2 + \xi(1)$, and for $n = 3$,
where $E(3, 1) = E(3, 2) = \frac{7}{3} > 2.0849626 > \log_2 3 + \xi(1)$.
The following needs to be proven for $n \ge 4$, assuming for all smaller $n$ the lemma statement holds.
Consider $E(n, d, s)$ from Lemma~\ref{end-recursion}. Let $t_{\min} = \max(0, s + d - n)$, $t_{\max} = \min(s, d)$, then:
\begin{align*}
    E(n, d, s) &= \sum_{t = t_{\min}}^{t_{\max}} \max(E(s, s - t), E(n - s, d - t)) \cdot P[t] \\
               &= \sum_{t = t_{\min} + 1}^{t_{\max} - 1} \max(E(s, s - t), E(n - s, d - t)) \cdot P[t] \\
               &+ E_{\min}(n, d, s) + E_{\max}(n, d, s), \\
               &\ge (\log_2 \max(s, n - s) + C) \cdot (1 - P[t_{\min}] - P[t_{\max}]) \\
               &+ E_{\min}(n, d, s) + E_{\max}(n, d, s),
\end{align*}
where the following notation is used:
\begin{align*}
 C    &= \xi(\lfloor \log_2 \max(s, n - s) \rfloor) \ge \xi(\lfloor \log_2 n \rfloor), &
 P[t] &= \tbinom{s}{t}\tbinom{n - s}{d - t} / \tbinom{n}{d},
\end{align*}
\vspace{-4.5ex}
\begin{align*}
 E_{\min}(n, d, s) &= \max(E(s, s - t_{\min}), E(n - s, d - t_{\min})) \cdot P[t_{\min}], \\
 E_{\max}(n, d, s) &= \max(E(s, s - t_{\max}), E(n - s, d - t_{\max})) \cdot P[t_{\max}].
\end{align*}

The reduction of the middle of the sum is possible as for $t \in (t_{\min}; t_{\max})$ the second
arguments of both $E(s, s - t)$ and $E(n - s, d - t)$ never turn extreme. For $t = t_{\min}$
and $t = t_{\max}$ the situation is different: for both of them, at least one of the arguments under the maximum
becomes extreme (which means either 0 for $E(n, 0)$ or 1 for $E(n, n)$). What exactly may happen, depends on how $n$, $s$ and $d$ are related.

\textbf{Case 1:} $s = d$. This renders $t_{\max} = s = d$ and $E_{\max}(n, s, d) = 0$ as both values under $\max$ have their seconds arguments zeroed out.
The corresponding probability $P[t_{\max}]$ becomes $1 / \binom{n}{d}$.

\textbf{Case 2:} $s + d = n$. This renders $t_{\min} = 0$ and $E_{\min}(n, s, d) = 1$ as both values under $\max$ have their second arguments maximized.
The corresponding probability $P[t_{\min}]$ also becomes $1 / \binom{n}{d}$.

It appears fruitful to consider two situations: the two cases above may happen together or separately.

\textbf{Both 1 and 2:} 

In this case, $s = d$ and $s + d = n$, so $s = d = n/2$ and $n$ is even. Then it holds that (assuming $w = \lfloor \log_2 n \rfloor$):
\begin{align*}
E(n, d, s) &= \left(\log_2 \frac{n}{2} + C\right)\cdot\left(1 - \frac{2}{\binom{n}{n/2}}\right) + \frac{1}{\binom{n}{n/2}} \\
           &= \log_2 \frac{n}{2} + \xi\left(\left\lfloor \log_2 \frac{n}{2} \right\rfloor\right) - \frac{2 \log_2 n+ 2\xi(\lfloor \log_2 \frac{n}{2} \rfloor) - 3}{\binom{n}{n/2}} \\
           &\ge \log_2 \frac{n}{2} + \xi(w - 1) - \frac{2w + 2\xi(w-1) - 3}{\binom{2^w}{2^{w-1}}} \\
           &= \log_2 n - 1 + \xi(w) = \log_2 n - 1 + \xi(\lfloor \log_2 n \rfloor).
\end{align*}

\textbf{Either 1 or 2:} The cases are almost symmetrical, except that Case 1 makes the corresponding subexpression zero and Case 2 makes it one, so we consider
Case 1 as a worse one. In this case, the following holds:
\begin{equation*}
    E(n, d, s) \ge (\log_2 \max(d, n - d) + C) \cdot \left(1 - \frac{1}{\binom{n}{d}}\right),
\end{equation*}
and we are going to show that it is at most $\log_2 (n/2) + C$. As it cannot happen that $d = n/2$ (otherwise Case 2 happens as well), we are free to consider $1 \le d < n/2$,
as $n/2 < d < n$ is symmetric.

Now we prove that left hand side of the inequality decreases as $2 \le d < n/2$. We consider the following function for $n \ge 5$:
\begin{equation*}
\psi(n, d, C) = (\log_2 (n - d) + C) \cdot \left(1 - \frac{1}{\binom{n}{d}}\right).
\end{equation*}

After algebraic transformations\footnote{The proof is moved to Appendix~\ref{apx-magic-1} for clarity.} we find that:
\begin{align*}
\psi(n, d, C) - \psi(n, d+1, C) &= \log_2\left(1 + \frac{1}{n - d - 1}\right) \cdot \left(1 - \frac{1}{\binom{n}{d}}\right)  \\
                                &- \frac{\log_2 (n - d - 1) + C}{\binom{n}{d}}\cdot\left(1 - \frac{d+1}{n-d}\right).
\end{align*}

Note that the difference grows as $d$ grows, so it is enough to prove that it is non-negative when $d = 2$. After such substitution,
replacing $\log_2$ with $\ln$ and multiplying $C$ by $\ln 2$,
and using that $\ln(1 + x) \ge x - x^2/2$, we get that the difference is at most:
\begin{equation*}
\left(\frac{1}{n-3} - \frac{1}{2(n-3)^2}\right)\left(1 - \frac{2}{n(n-1)}\right) - 2\cdot\frac{\ln(n-3) + C \ln 2}{n(n-1)} \cdot\frac{n-5}{n-2}.
\end{equation*}

By multiplying by $(n-3)$, which retains the sign, opening all brackets and retaining only negative addends apart from $+1$, we get the following:
\begin{equation*}
1 - \frac{1}{2(n-3)} - \frac{2}{n(n-1)} - 2\cdot\frac{\ln (n-3) + C \ln 2}{n}\cdot\frac{(n-3)(n-5)}{(n-1)(n-2)}.
\end{equation*}

By replacing the last fraction by 1, applying $\frac{\ln(n-3)}{n} \le 0.202$ for $n \ge 5$ and replacing all $n$ with 5, we get:
\begin{equation*}
1 - \frac{1}{4} - \frac{1}{10} - 0.404 - \frac{2 \ln 2 \cdot C}{5} = 0.246 - \frac{2 \ln 2 \cdot C}{5},
\end{equation*}
which is positive for all $C \le 0.88$.
For this reason, the lower bound can be reached on either $\psi(n, 1, C)$ or $\psi(n, \lfloor \frac{n-1}{2} \rfloor, C)$.

First, we show that:
\begin{align*}
\psi(n, 1, C) - \log_2 \frac{n}{2} + C &= (\log_2 (n-1) + C) \cdot \left(1 - \frac{1}{n}\right) - \log_2 \frac{n}{2} - C\\
              &= \log_2 \left(2 - \frac{2}{n}\right) - \frac{\log_2 (n-1) + C}{n}.
\end{align*}

This is non-negative for $n \ge 4$, using that $\log_2 (n-1) / n \le 0.402$
and the rest of the function is monotone in $n$, for all $C \le 0.728$.

Next, we prove similar relations for $\psi(n, \lfloor \frac{n-1}{2} \rfloor, C)$, which are written uniformly
using $\Delta = 2$ for even $n$ and $\Delta = 1$ for odd $n$:
\begin{equation*}
\psi(n, \frac{n-\Delta}{2}, C) - (\log_2 \frac{n}{2} + C) = \log_2\left(1 + \frac{\Delta}{n}\right) - \frac{\log_2 \frac{n+\Delta}{2} + C}{\binom{n}{\frac{n-\Delta}{2}}} \ge 0.
\end{equation*}

For this we again use $\ln(1 + x) \ge x - x^2/2$ and prove\footnote{The proof is moved to Appendix~\ref{apx-magic-2} for clarity.} that the last subtrahend, divided by $n$, grows with $n$.
The proofs work for $C \le 0.57908006$.

As a result, when either Case 1 or Case 2 happens, but not both, we proved that $E(n, d, s) \ge \log_2 \frac{n}{2} + C = \log_2 n + C - 1$.

\textbf{Case 3:} all other cases. In these cases, only one of the values under $\max$ becomes either 0 or 1, while another value obeys the general rule
and is at least one. There are two pairs of symmetrical cases, and without loss of generality we consider only one of them: $t_{\min} = 0, n > s + d$.
This case influences $E_{\min}$, which becomes:
\begin{equation*}
E_{\min}(n, s, d) = E(n - s, d) \cdot P[0] \ge (\log_2 (n - s) + C) \cdot P[0].
\end{equation*}

If $n - s \ge s$, $E_{\min}$ is not actually altered and follows the general scheme, so we further assume that $s > n - s$. Under this assumption,
no other symmetric cases from this point can simultaneously influence $E(n, d, s)$, so we write:
\begin{align*}
    E(n, d, s) &\ge C + P[0] \log_2 (n - s) + (1 - P[0]) \log_2 s \\
               &=   C + \log_2 s - P[0] \cdot (\log_2 s - \log_2 (n - s)) \\
               &=   C + \log_2 s - \frac{\binom{n - s}{d}}{\binom{n}{d}} \cdot (\log_2 s - \log_2 (n - s)) \\
               &\ge C + \log_2 s - \frac{n - s}{n} \cdot (\log_2 s - \log_2 (n - s)) \\
               &=   C + \frac{n - s}{n} \log_2 (n - s) + \frac{s}{n} \log_2 s \\
               &\ge C + \log_2 \frac{n}{2} = C + \log_2 n - 1.
\end{align*}

As a summary, in all cases except when Case 1 and Case 2 happen simultaneously, 
$E(n, d, s) \ge \log_2 n + C - 1 \ge \log_2 n + \xi(\lfloor \log_2 n \rfloor) - 1$,
and in the remaining case $E(n, d, s) \ge \log_2 n + \xi(\lfloor \log_2 n \rfloor) - 1$ was proven directly,
which completes the induction.

To complete the proof of the entire lemma, it is enough to note that $\xi(n)$ converges to its limit with the superpolynomial speed.
For instance, $\xi(5) \approx 0.11864060660016391$ and $\xi(6) = \xi(5)$ within the double floating point precision.
As a result, one can safely use the lower bound $\xi(n) \ge 0.1186406$.
\end{proof}

In practice, due to various pessimizations in our proofs, the lower bound is a little bit better: $E(n, d) \ge \log_2 + \frac{1}{6}$
for all $n$ and $0 < d < n$. Now we are ready to prove the main result of this paper.

\begin{proof}[Proof of Theorem~\ref{unbiased-bbc-binval}]
Lemmas~\ref{end-upper} and~\ref{end-lower}, along with the refinement of the former based on the exact computations of $E(n,d)$,
impose the following restrictions on $E(n, d)$ for $0 < d < n$:
\begin{equation*}
    \log_2 n + 0.1186406 \le E(n, d) \le \log_2 n + 1.42141558.
\end{equation*}

The black-box complexity is:
\begin{equation*}
BBC(\textsc{BinVal}) = 1+\sum_{d=0}^{n} E(n,d) \frac{\binom{n}{d}}{2^n},
\end{equation*}
which we bound from above as follows:
\begin{align*}
    BBC(\textsc{BinVal}) &\le 1 + \frac{1}{2^n} + \sum_{d = 1}^{n-1} (\log_2 n + 1.42141558) \frac{\binom{n}{d}}{2^n} \\
                         &= 1 + \frac{1}{2^n} + (\log_2 n + 1.42141558) \cdot \left(1 - \frac{2}{2^n}\right) \\
                         &= \log_2 n + 2.42141558 - \Theta\left(\frac{\log n}{2^n}\right),
\end{align*}
and from below by $\log_2 n + 1.1186406 - \Theta(\log n / 2^n)$ in the same way.
\end{proof}

\section{Conclusion}\label{sect:conc}

\begin{figure}[!t]
\begin{tikzpicture}
\begin{axis}[width=\textwidth, xmode=log, legend pos=north west]
    \addplot table[x=n, y=bbc] {plots.dat};
    \addlegendentry{BBC};
    \addplot table[x=n, y=lower] {plots.dat};
    \addlegendentry{Lower bound (no $o(1)$ term)};
    \addplot table[x=n, y=upper] {plots.dat};
    \addlegendentry{Upper bound (no $o(1)$ term)};
    \addplot table[x=n, y=ceil] {plots.dat};
    \addlegendentry{Upper bound from~\cite{ranking-based-complexity}};
\end{axis}
\end{tikzpicture}
\caption{Plots of the exact black-box complexity of \textsc{BinVal} and of its upper and lower bounds}\label{plot}
\end{figure}
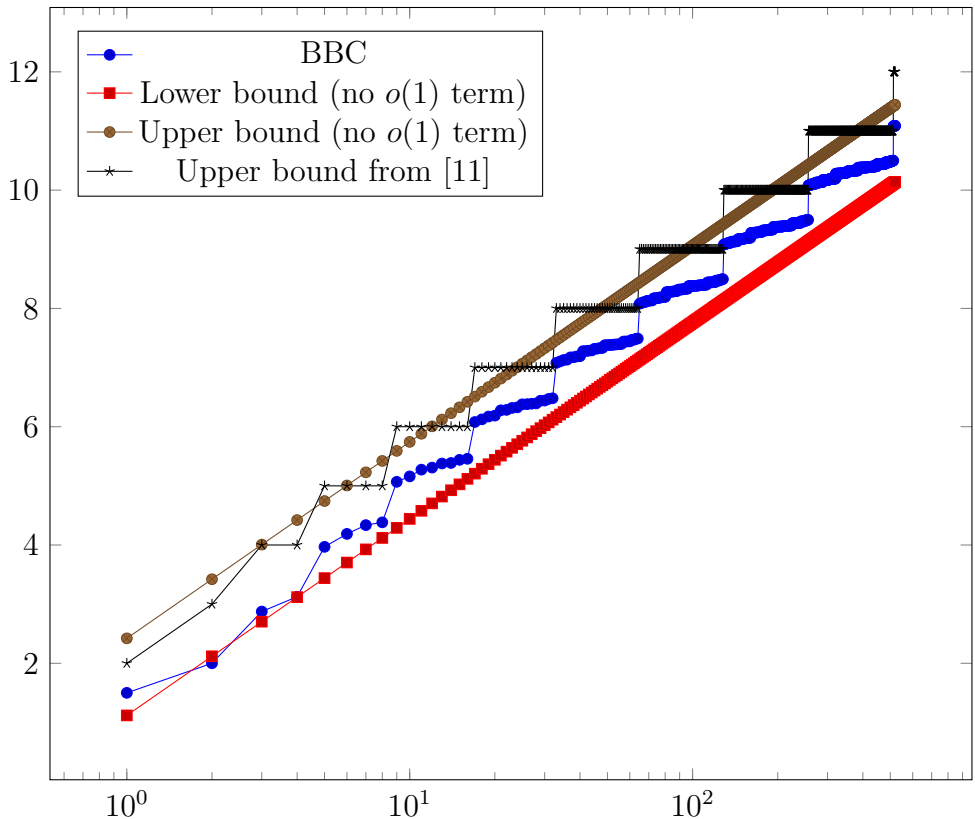

We proved quite sharp bounds on the black-box complexity of permutation-enabled version of the binary value (\textsc{BinVal}) function,
which is $\log_2 n + \Theta(1)$, where the constants that define $\Theta(1)$ are also known and their difference is less than $1.4$.
The upper bound complements the existing upper bound from~\cite{ranking-based-complexity}, as it is more precise for roughly a half of
problem sizes (see Fig.~\ref{plot} for visual comparison), and the lower bound was proven for the first time.

We feel that \textsc{BinVal}, due to its all-revealing fitness, may be used as a convenient tool to prove the lower bounds
on $k$-ary unbiased black-box complexities of unimodal functions, including \textsc{OneMax}, for which it is still an open question
for many years.

\bibliographystyle{abbrv}
\bibliography{../../../../bibliography}

\newpage
\appendix

\section{Properties of $\xi$}\label{apx-magic-3}

The function $\xi(k)$ is defined as follows:
\begin{align*}
    \xi(1) &= \frac{1}{2}, \\
    \xi(k+1) &= \xi(k) - \frac{2 k + 2 \xi(k) - 1}{\binom{2^{k+1}}{2^{k}}}.
\end{align*}

\textbf{First} we show that $\xi(k+1) \ge \xi(k)/3$. This will automatically show that $\xi(k) > 0$ and that it decreases as $k$ grows.
From this fact, it also holds that there exists a constant $C_0 \ge 0$ such that $\xi(k) \ge C_0$.

To do this, we use induction to assume $\xi(k) \ge \frac{1}{2} \cdot \frac{1}{3^{k-1}}$ and show that:
\begin{align*}
    \xi(k) \cdot \frac{2}{3} - \frac{2k + \xi(k) - 1}{\binom{2^{k+1}}{2^k}}
  &= \xi(k) \cdot \left(\frac{2}{3} - \frac{2}{\binom{2^{k+1}}{2^k}}\right) - \frac{2k-1}{\binom{2^{k+1}}{2^k}}
\\&\ge \frac{1}{2} \cdot \frac{1}{3^{k-1}} \cdot \left(\frac{2}{3} - \frac{2}{\binom{4}{2}}\right) - \frac{2k-1}{\binom{2^{k+1}}{2^k}}
\\&= \frac{1}{3^k} \cdot \left( \frac{1}{2} - \frac{(2k-1) \cdot 3^k}{\binom{2^{k+1}}{2^k}} \right).
\end{align*}

To get out the last occurences of $k$ that may influence the sign of the expression, we show that the last fraction is non-increasing:
\begin{align*}
\frac{ \frac{(2k+1) \cdot 3^{k+1}}{\binom{2^{k+2}}{2^{k+1}}} }
     { \frac{(2k-1) \cdot 3^k}{\binom{2^{k+1}}{2^k}} }
  &= 3 \cdot \left(1 + \frac{2}{2k-1}\right) \cdot \frac{2^{k+1}! 2^{k+1}! 2^{k+1}!}{2^{k+2}! 2^k! 2^k!}
\\&= 3 \cdot \left(1 + \frac{2}{2k-1}\right) \cdot \frac{(2^{k+1})^2}{2^{k+2}(2^{k+2}-1)} \\&\cdot \frac{(2^{k+1}-1)^2}{(2^{k+2}-2)(2^{k+2}-3)} \cdot \ldots \cdot \frac{(2^k+1)^2}{(2^{k+1}+2)(2^{k+1}+1)}
\\&= 3 \cdot \left(1 + \frac{2}{2k-1}\right) \cdot \frac{1}{2^{2^k}} \cdot \frac{2^{k+1}}{2^{k+2}-1} \cdot \frac{2^{k+1}-1}{2^{k+2}-3} \cdot \ldots \cdot \frac{2^k+1}{2^{k+1}+1}
\\&\le \frac{9}{4} \cdot \frac{2^{k+1}}{2^{k+2}-1} \cdot \frac{2^k+1}{2^{k+1}+1} \le \frac{27}{35} < 1.
\end{align*}

So we can finish proving this statement by substituting $k = 1$ into the last fraction:
\begin{equation*}
  \xi(k) \cdot \frac{2}{3} - \frac{2k + \xi(k) - 1}{\binom{2^{k+1}}{2^k}} 
    \ge \frac{1}{3^k} \cdot \left( \frac{1}{2} - \frac{(2k-1) \cdot 3^k}{\binom{2^{k+1}}{2^k}} \right)
    \ge \frac{1}{3^k} \cdot \left( \frac{1}{2} - \frac{3}{\binom{4}{2}} \right) \ge 0.
\end{equation*}

\textbf{Second} we show that $f(n) = \log_2 n + \xi(\lfloor \log_2 n \rfloor)$ is non-decreasing for integer $n \ge 2$. It is enough to show it for pairs $n_1 = 2^k-1$ and $n_2 = 2^k$, $k \ge 2$,
since on other occasions the $\xi$-related part does not change, and $\log_2 n$ is a non-decreasing function. First we transform the required inequality into a simpler form:
\begin{align}
    \log_2 (2^k - 1) + \xi(k - 1) &\overset{?}{\le} \log_2 2^k + \xi(k) \notag\\
    \log_2 (2^k - 1) + \xi(k - 1) &\overset{?}{\le} \log_2 (2^k - 1) + \log_2 \left(1 + \frac{1}{2^k-1}\right) + \xi(k - 1) \notag\\&- \frac{2 (k - 1)+ 2 \xi(k - 1) - 1}{\binom{2^k}{2^{k-1}}} \notag\\
    \frac{2 k + 2 \xi(k - 1) - 3}{\binom{2^k}{2^{k-1}}} &\overset{?}{\le} \log_2 \left(1 + \frac{1}{2^k-1}\right) \notag\\
    \frac{2 k + 2 \xi(k - 1) - 3}{\binom{2^k}{2^{k-1}}} &\overset{?}{\le} \frac{1}{\ln 2} \left(\frac{1}{2^k-1} - \frac{1}{2(2^k-1)^2}\right) \notag\\
    \frac{2 k + 2 \xi(k - 1) - 3}{2^k \cdot \frac{(2^k-2)!}{2^{k-1}!2^{k-1}!}} &\overset{?}{\le} \frac{1}{\ln 2} \left(1 - \frac{1}{2(2^k-1)}\right). \label{lastq}
\end{align}

Now we show that $\frac{k}{2^k \cdot \frac{(2^k-2)!}{2^{k-1}!2^{k-1}!}}$ is a decreasing function for $k \ge 2$. Indeed:
\begin{align*}
\frac{ \frac{k+1}{2^{k+1} \cdot \frac{(2^{k+1}-2)!}{2^{k}!2^{k}!}} }
     { \frac{k}{2^k \cdot \frac{(2^k-2)!}{2^{k-1}!2^{k-1}!}} }
     &= \frac{k+1}{2k} \cdot \frac{(2^k-2)! 2^k! 2^k!}{(2^{k+1}-2)! 2^{k-1}! 2^{k-1}!}
   \\&= \frac{k+1}{2k} \cdot \frac{(2^k)^2}{(2^{k+1}-2)(2^{k+1}-3)} \\&\cdot \frac{(2^k - 1)^2}{(2^{k+1}-4)(2^{k+1}-5)} \cdot \ldots \cdot \frac{(2^{k-1}+1)^2}{2^k (2^k-1)}
   \\&\le \frac{k+1}{2k} < 1.
\end{align*}

Now in~\eqref{lastq} there is a decreasing function on the left hand side and an increasing one on the right hand side. When $k = 2$, this is:
\begin{equation*}
    \frac{2 k + 2 \xi(k - 1) - 3}{2^k \cdot \frac{(2^k-2)!}{2^{k-1}!2^{k-1}!}} \le \frac{4 + 2 \cdot 0.5 - 3}{4 \cdot \frac{2!}{2!\cdot2!}} = 1
\le 1.202 < \frac{1}{\ln2} \left(1 - \frac{1}{6}\right),
\end{equation*}

which completes the proof.

\section{Tedious Transformation \#1}\label{apx-magic-1}

We consider the following function for $1 \le d < n/2$:
\begin{equation*}
\psi(n, d, c) = (\log_2 (n - d) + c) \cdot \left(1 - \frac{1}{\binom{n}{d}}\right).
\end{equation*}

The following holds:
\begin{align*}
\psi(n&, d, c) - \psi(n, d - 1, c) = \log_2 (n - d) + c - \frac{\log_2 (n - d) + c}{\binom{n}{d}} \\
		&- \log_2 (n - d - 1) - c + \frac{\log_2 (n - d - 1) + c}{\binom{n}{d+1}}  \\
              &= \log_2\left(1 + \frac{1}{n - d - 1}\right) - \frac{\log_2 (n - d) + c}{\binom{n}{d}} \\&+ \frac{\log_2 (n - d - 1) + c}{\binom{n}{d}}\cdot\frac{d+1}{n-d} \\
              &= \log_2\left(1 + \frac{1}{n - d - 1}\right) - \frac{\log_2 (n - d - 1) + \log_2\left(1 + \frac{1}{n - d - 1}\right) + c}{\binom{n}{d}} \\
                                                             &+ \frac{\log_2 (n - d - 1) + c}{\binom{n}{d}}\cdot\frac{d+1}{n-d}  \\
              &= \log_2\left(1 + \frac{1}{n - d - 1}\right) \cdot \left(1 - \frac{1}{\binom{n}{d}}\right) \\
                                                             &- \frac{\log_2 (n - d - 1) + c}{\binom{n}{d}}\cdot\left(1 - \frac{d+1}{n-d}\right).
\end{align*}

\newpage
\section{Tedious Transformation \#2}\label{apx-magic-2}

Using the same definition as above, we prove that $\psi(n, \lfloor \frac{n-1}{2} \rfloor, c) \ge \log_2 \frac{n}{2} + c$ \textbf{for small enough $c$}, and for $n \ge 4$.
To denote $\lfloor \frac{n-1}{2} \rfloor$ more conveniently, we use a macro $\Delta = 2$ for even $n$ and $\Delta = 1$ for odd $n$,
such that this value becomes $\frac{n-\Delta}{2}$. We use the lower bound for $\ln(1 + x) \ge x - x^2/2$.

\begin{align*}
\psi\left(n, \frac{n-\Delta}{2}, c\right) &- \left(\log_2 \frac{n}{2} + c\right)
\\&= \left(\log_2 \frac{n + \Delta}{2} + c\right) \cdot \left(1 - \frac{1}{\binom{n}{\frac{n-\Delta}{2}}}\right) - \left(\log_2 \frac{n}{2} + c\right)
\\&= \log_2\left(1 + \frac{\Delta}{n}\right) - \frac{\log_2 \frac{n+\Delta}{2} + c}{\binom{n}{\frac{n-\Delta}{2}}}
\\&= \frac{1}{\ln2}\cdot\left(\ln\left(1 + \frac{\Delta}{n}\right) - \frac{\ln \frac{n+\Delta}{2} + c\ln2}{\binom{n}{\frac{n-\Delta}{2}}}\right)
\\&\ge \frac{1}{\ln2}\cdot\left( \frac{\Delta}{n} - \frac{\Delta^2}{2n^2} - \frac{\ln \frac{n+\Delta}{2} + c\ln2}{\binom{n}{\frac{n-\Delta}{2}}}\right)
\\&=   \frac{\Delta}{n\ln2} \cdot\left(1 - \frac{\Delta}{2n} - \frac{\ln \frac{n+\Delta}{2} + c\ln2}{\Delta\cdot\frac{(n-1)!}{\frac{n-\Delta}{2}!\frac{n+\Delta}{2}!}}\right).
\end{align*}

First we want to show that the last expression from above grows with $n$ (increasing by 2) starting from $n = 3$ for $\Delta = 1$ and from $n = 4$ for $\Delta = 2$. For this, it is enough to show
that the following function grows with $n$, while $n$ increases each time by 2, starting from $2 + \Delta$:
\begin{equation*}
\eta(n, \Delta) = \frac{\ln \frac{n+\Delta}{2}}{\frac{(n-1)!}{\frac{n-\Delta}{2}!\frac{n+\Delta}{2}!}} =
\frac{(\ln \frac{n+\Delta}{2}) \frac{n-\Delta}{2}! \frac{n+\Delta}{2}!}{(n - 1)!}.
\end{equation*}

To do that, we estimate $\eta(n + 2, \Delta) / \eta(n, \Delta)$ and prove that it is at most one:
\begin{align*}
\frac{\eta(n + 2, \Delta)}{\eta(n, \Delta)} &= \frac{(\ln \frac{n+2+\Delta}{2}) \frac{n+2-\Delta}{2}! \frac{n+2+\Delta}{2}! (n-1)!}
                                                    {(\ln \frac{n+\Delta}{2}) \frac{n-\Delta}{2}! \frac{n+\Delta}{2}! (n+1)!}
\\&= \log_{\frac{n+\Delta}{2}}\left(\frac{n+2+\Delta}{2}\right) \cdot \frac{\frac{n+2-\Delta}{2} \cdot \frac{n+2+\Delta}{2}}{n(n+1)}
\\&= \log_{\frac{n+\Delta}{2}}\left(\frac{n+\Delta}{2}+1\right) \cdot \frac{(n+2-\Delta)\cdot(n+2+\Delta)}{4n(n+1)}
\\&\overset{(*)}{=} \log_{\frac{n+\Delta}{2}}\left(\frac{n+\Delta}{2}+1\right) \cdot \frac{(n+2+\Delta)}{4(n-1+\Delta)}
\\&= \log_{\frac{n+\Delta}{2}}\left(\frac{n+\Delta}{2}+1\right) \cdot \frac{1}{4}\cdot\left(1 + \frac{3}{n-1+\Delta}\right)
\\&\le \log_{\frac{n+\Delta}{2}}\left(\frac{n+\Delta}{2}+1\right) \cdot \frac{1}{4}\cdot\left(1 + \frac{3}{2\Delta+1}\right)
\\&\le \log_{\frac{n+\Delta}{2}}\left(\frac{n+\Delta}{2}+1\right) \cdot \frac{1}{2} < 1,
\end{align*}
where in the move labeled ($*$) we considered what happens when $\Delta=1$ and $\Delta=2$ and ``optimized'' the appearance of the result,
and the last move used the fact that $x^2 > x + 1$ for $x \ge 2$.

Next, we estimate $\eta(n, \Delta)$ from above by taking it at $n = 6 - \Delta$ (which is $n = 4$ for $\Delta = 2$ and $n = 5$ for $\Delta = 1$):
\begin{align*}
\eta(n, \Delta) &\le \eta(6 - \Delta, \Delta) = \frac{(\ln \frac{6}{2}) \frac{6 - 2\Delta}{2}! \frac{6}{2}!}{(5 - \Delta)!} 
\\&= \frac{6\ln 3 \cdot (3-\Delta)!}{(5-\Delta)!} = \frac{6 \ln 3}{(4-\Delta)(5-\Delta)}.
\end{align*}

Now the statement that we need to prove is essentially equivalent to proving the following:
\begin{align*}
1 - \frac{\Delta}{2n} - \frac{\ln \frac{n+\Delta}{2} + c\ln2}{\Delta\cdot\frac{(n-1)!}{\frac{n-\Delta}{2}!\frac{n+\Delta}{2}!}} \ge 
1 - \frac{\Delta}{12 - 2\Delta} - \frac{6 (\ln 3 + c \ln 2)}{\Delta(4-\Delta)(5-\Delta)} \ge 0.
\end{align*}

For $\Delta = 1$ this holds for $c \le \frac{9 - 5 \ln 3}{9 \ln 2} \approx 1.011888$, for $\Delta = 2$ it holds for $c \le \frac{1.5 - \ln 3}{\ln 2} \approx 0.57908006$.

\end{document}